\documentclass[accepted]{uai2026} 
                        
\usepackage[american]{babel}

\usepackage{natbib} 
\usepackage{mathtools} 
\usepackage{booktabs} 
\usepackage{tikz} 

\usepackage{graphicx}
\usepackage{amsmath}
\usepackage{amsthm}
\usepackage{booktabs}
\usepackage{algorithm}
\usepackage{algpseudocode}
\usepackage{calrsfs}
\usepackage{pifont} 
\usepackage{xcolor}
\usepackage{amsfonts}

\newtheorem{theorem}{Theorem}
\newtheorem{definition}{Definition}
\newtheorem{example}{Example}

\title{Lifted Relational Probabilistic Inference via Implicit Learning}

\author[1]{Luise Ge}{}
\author[1]{Brendan Juba}{}
\author[1]{Kris Nilsson}{}
\author[1]{Alison Shao}{}

\affil[1]{%
    Computer Science $\&$ Engineering\\
    Washington University in St. Louis\\
    St. Louis, Missouri, USA
}

\begin{document}
\maketitle

\begin{abstract}
  Reconciling the tension between inductive learning and deductive reasoning in first-order relational domains is a longstanding challenge in AI. We study the problem of answering queries in a first-order relational probabilistic logic through a joint effort of learning and reasoning, \textit{without ever constructing an explicit model}.  Traditional lifted inference assumes access to a complete model and exploits symmetry to evaluate probabilistic queries; however, learning such models from partial, noisy observations is intractable in general. We reconcile these two challenges through 
\textit{implicit learning to reason} and \textit{first-order relational probabilistic inference} techniques. More specifically, we merge incomplete first-order axioms with independently sampled, partially observed examples into a bounded-degree fragment of the sum-of-squares (SOS) hierarchy in polynomial time. Our algorithm performs two lifts simultaneously: (i) \emph{grounding-lift}, where renaming-equivalent ground moments share one variable, collapsing the domain of individuals; and (ii) \emph{world-lift}, where all pseudo-models (partial world assignments) are enforced in parallel, producing a global bound that holds across all worlds consistent with the learned constraints. These innovations yield the first polynomial-time framework that implicitly learns a first-order probabilistic logic and performs lifted inference over both individuals and worlds.
\end{abstract}

\section{Introduction}

Many real-world domains—from scientific databases to social networks—involve reasoning about uncertain relationships among entities. Probabilistic relational reasoning addresses the challenge of drawing sound inferences in such settings, where both the relational structure (who relates to whom, which properties hold) and the probabilistic dependencies (how likely various configurations are) must be represented and reasoned about jointly. This requires combining the expressiveness of first-order logic, which naturally captures relational structure through quantified variables and predicates, with probabilistic semantics that quantify uncertainty. However, this combination faces a nontrivial challenge: expressive logical languages lead to computational intractability in large domains.

Probabilistic relational models have been  developed to address probabilistic relational inferences. Markov Logic Networks (MLNs) \citep{richardson2006markov}, Probabilistic Relational Models (PRMs) \citep{getoor2001learning}, and probabilistic logic programming frameworks  \citep{de2007problog,bruynooghe2010problog,manhaeve2018deepproblog} define distributions over relational structures using weighted logical rules or probabilistic programs. While lifted inference techniques~\citep{van2011lifted} exploit symmetries to improve scalability, these approaches share a fundamental limitation: they treat learning and inference as separate stages. Learning constructs a complete probabilistic model from data, then inference operates on that fixed model. This modularity is brittle—learning expressive models is often NP-hard~\citep{koller2009probabilistic}, and partial observations obscure true dependencies. Even approximate models frequently fail to preserve the symmetries and sparsity that enable tractable lifted reasoning, resulting in systems that are both computationally expensive and structurally misaligned with inference requirements.

Recently, a distinct approach to probabilistic inference has emerged through sum-of-squares (SOS) logic~\citep{juba2019polynomial,ge2025polynomial}. Rather than constructing an explicit distribution and performing approximate inference, SOS checks whether the knowledge base constraints can be simultaneously satisfied by any probability distribution. This is formulated as a semidefinite program that searches for polynomial refutations—if a bounded-degree refutation exists, the constraints are provably inconsistent. When such a refutation exists using polynomials of degree at most $d$, it can be found in polynomial time (for fixed $d$), providing formal certificates rather than approximate solutions.

However, these methods still assume access to a complete, explicitly specified model, leaving the challenge of learning from partial and noisy observations unaddressed. We introduce a new framework for implicit probabilistic relational reasoning that sidesteps the need to construct an explicit probabilistic model. Instead, our method integrates background knowledge and partial observations on the fly, while still enabling sound and tractable probabilistic reasoning over relational structures.

\textbf{Our Contribution} Our result may be seen as drawing on and combining the key features of two previously incomparable approaches to implicit learning. The first performed implicit learning in probabilistic inference, using the propositional sum-of-squares system \citep{juba2019polynomial}. The second achieved implicit learning in first-order Boolean logics \citep{belle2019}. 

The combination is nontrivial due to the following technical issue: the approach used by Juba for probabilistic inference embedded the empirical estimates into the semidefinite program relaxation used to perform the inference. By contrast, Belle and Juba leveraged Belle's grounding trick for first-order inference \citep{belle2017open}: they enumerated grounding sets for each of the partial observations, searching for an appropriate grounding for each observation. It is not immediately clear how to write constraints for the empirical expressions of the semidefinite program used by Juba that capture the appropriate choice of groundings for each example; na\"{\i}vely, it might seem to correspond to an exponential number of constraints, one for each combination of grounding sets for each of the partial examples. 

This impasse is broken by observing that fixing the grounding set across all examples is necessary to obtain valid empirical estimates of the individual marginal probabilities. For example, in a lottery, we want bounds on the probability of each ticket winning, whereas if we could change the grounding considered in each draw, we could end up with an overly optimistic estimate that uses the winning ticket each time. So we solve a semidefinite program for each of the grounding sets, in which the same grounding is used across all of the empirical expressions. As a consequence, we also end up with formally distinct notions of ``witnessing'' (the criteria for inclusion in the implicit knowledge base) of the two types of knowledge in our probabilistic reasoning, expectation bounds and support constraints. Indeed, the former are learned by empirical estimates with confidence intervals (which must use the consistent set of ground variables), whereas the latter can be refuted by the existence of a single violating grounding. This complication arises due to the interaction of the two key features here, of probabilistic expressions and the relational language.

\subsection{Related Work}

\paragraph{Inference Approaches}
Tractable Markov Logic (TML) and Weighted First-Order Model Counting (WFOMC) are both important frameworks for reasoning under uncertainty. They appear similar to our model, but they operate in settings where an explicitly specified model is assumed. A more recent similar approach is that of {Ge et. al.} All of these approaches are focused on tractable inference within such models, not learning, hence they are not direct comparisons to our model.

\paragraph{Model‐Free Approaches}  
Model‐free methods answer queries without constructing a full probabilistic model. The closest antecedents are the implicit learning‐to‐reason frameworks of \citet{juba2013implicit,juba2019polynomial}, \citet{belle2019}, \citet{mocanu2020polynomial}, and \citet{rader2021learning}. However, none of these extend to first‐order probabilistic relational domains. Our approach generalizes implicit reasoning to first-order Probability Relational Logic (FOPRL)~\cite{ge2025polynomial} via a lifted Sum‐of‐Squares relaxation, enabling tractable inference with \emph{double lifting} (grounding‐lift and world‐lift) directly on axioms and data.

\paragraph{Classical Model‐Based Approaches}  
Probabilistic relational models \citep{richardson2006markov,getoor2001learning} define joint distributions over relational structures via weighted logical rules. Although expressive, they suffer from intractable inference and expensive structure learning under partial observability. Lifted inference techniques~\citep{van2011lifted} exploit symmetries to reduce grounding, but still assume a complete model and face worst‐case hardness. ILP methods~\citep{cropper2022inductive} learn symbolic rules from data but typically ignore uncertainty or require extensive hand‐crafting.

\paragraph{Neural‐Symbolic Approaches}  
DeepProbLog~\citep{manhaeve2018deepproblog} extends ProbLog with trainable neural predicates, but relies on grounding and Sentential Decision Diagram (SDD) compilation, limiting scalability in evolving domains. Neural Markov Logic Networks (NMLNs)~\citep{marra2021neural} replace symbolic rules with neural potentials over fragments, learning a smooth approximation of the joint distribution. Both remain fundamentally model‐based—requiring sampling or approximate inference over an explicit generative model—and so do not eliminate the bottlenecks of structure learning or grounding. Approaches like Neural Theorem Provers~\citep{rocktaschel2017end}, Logic Tensor Networks~\citep{donadello2017logic} similarly attempt to reconcile symbolic reasoning with neural networks, but ultimately face similar model-based limitations requiring approximate inference or sampling over an explicit representation.

\paragraph{Large Language Models, Reasoning Models, and so on}
An influential paper on language models (specifically focusing on the then-new GPT-4) suggested that these models captured reasoning ability \cite{bubeck2023sparks}. In this approach, a large neural network is simply trained to predict the next token in a corpus of natural language text; by setting up a reasoning problem and asking the model to predict the answer, these models can be coerced to solve reasoning tasks. Numerous groups observed that these models perform unreliably on such tasks \citep{dziri2023faith,mccoy2023embers,gendronlarge}. Subsequently, new ``reasoning'' models such as Open AI's o1 and DeepSeek's R1 \citep{guo2025deepseek} were released, based on requiring the language model to output a ``chain of thought'' towards responding to a query. These new systems do not fundamentally address the basic limitations of the previous generation of language models \citep{mccoy2024language,kambhampati2025reasoning}. The systems are extremely large, costly to train and use, opaque, and unreliable. We aim to do better in all of these respects. Furthermore, in comparison to the approach we consider here, these systems do not produce estimates of probabilities or expected values for a query---at least, not in any straightforward way.

\section{Preliminaries}\label{sec:preliminary}

We study probabilistic inference over relational structures by assuming a distribution $D$ over all possible relations among entities. Although this distribution is unknown, we assume a knowledge base that provides partial characterization about the distribution through two types of constraints: logical constraints that specify which restrictions the relations must hold almost surely, and moment constraints restricting expected values of the combinations of relations that must be satisfied. To express these constraints, we represent both the relations and their moments as variables and formulate the constraints as polynomial inequalities. We begin by defining the language used to formalize these expressions.

\textbf{Language:} We work in a first-order relational language $\mathcal{L}$ that includes a countably infinite set of \emph{names} $\mathcal{N} \cong \mathbb{N}$ serving as the domain of quantification. Formally, the set $\mathcal{N}$ will correspond to the set of integers $\mathbb{N}$, but informally we will use proper names such as $\{adam, alex, aaron,...\}$ for readability. The language also includes a countable set of \emph{variables} $\mathcal{V}=\{u,v,w,\dots\}$ that can represent arbitrary elements in the domain, and \emph{relational symbols} of various finite arities (e.g., $\{P(v), Q(u,v),...\}$). Importantly, we allow the relations to take \textbf{real values}. We also consider a finite set of \emph{constants} $\mathcal{C} \subseteq \mathcal{N}$, which are fixed elements of the domain. We refer to $\mathcal{G} = \mathcal{N} \setminus \mathcal{C}$ as \emph{generic names} that can be freely permuted without affecting the constants. A \emph{renaming substitution} is any permutation on $\mathcal{G}$ that fixes $\mathcal{C}$. 
A \emph{term} is $R(t_1,\dots,t_k)$ with $R$ a $k$-ary predicate and each $t_i$ is either a variable or a name.  If all $t_i\in\mathcal N$, we call the term an \emph{atom}. We write
$
  \mathrm{ATOMS}
  = 
  \{\,R(a_1,\dots,a_k)\mid R\text{ is $k$-ary},\;a_i\in\mathcal N\}.
$

\textbf{Logical Constraints:} We can think of each term $\tau\in\mathcal T$ as a random variable $x_\tau$ in our framework that represents the value of a relation
. In particular, the distribution we aim to reason about is a joint distribution over all ground (i.e., fully instantiated) relations. To express more complicated interactions among relations, we allow monomials—products of these variables raised to nonnegative integer powers, limited in degree to keep expressions manageable. Specifically,  fixing a maximum degree $d$, a \emph{monomial} of degree at most $d$ is $ \mu
  :=
  \prod_{\tau \in \mathcal{T}} x_\tau^{\alpha_\tau},
  \quad
  \alpha_\tau\in\mathbb Z_{\ge0},
  \quad
  \sum_i \alpha_\tau \le d.$
Given a finite set $M$ of such monomials and real coefficients $\{c_\mu\}$, a polynomial inequality over terms or alternatively a linear inequality over their monomials is
$
 p(\vec{x})= \sum_{\mu\in M} c_\mu\,\mu \ge 0,
  \quad
  \max_{\mu\in M}\deg(\mu)\le d.
$
We bind these inequalities under universal quantifiers analogous to the language of \citet{lakemeyer2002evaluation}, whose domain is carved out by Boolean combinations of equalities (e.g.,\ $u=v$, $x=\mathit{adam}$) over $\mathcal V\cup\mathcal C$.

By pairing an equality expression $\Xi$ with a polynomial inequality $\Lambda$ we can obtain a \emph{logical constraint} formula $\Phi = \forall \Xi \supset \Lambda$. This can be shortened to $\forall \Lambda$ when $\Xi$ is a tautology. The \emph{quantifier rank} of $\Phi$ can be given by the number of distinct variables occurring in $\Xi$ and $\Lambda$. The semantics of $\Phi$ is that given a substitution of names $\theta$ satisfies $\Xi$ then that same substitution will satisfy $\Lambda$.

\begin{example}
    The logical constraint $\forall x. height(x)\ge 0$ asserts that every individual must have nonnegative height.
\end{example}

\textbf{Expectation Constraints:} To express probabilistic knowledge, we introduce expectation constraints in the spirit of \citet{halpern2007characterizing}. For any monomial $\mu$, let $e(\mu)$ denote its expected value under the unknown distribution $D$. A (first-order) expectation constraint is a linear inequality over moment variables in the form of: $\forall\,\Xi\;\supset\;\sum c_{e(\mu)}\,e(\mu)\,\{\ge,\le\}c_0.$ Its quantifier rank and semantics mirror those of logical constraints.

\begin{example}
    The expectation constraint $\forall x \neq alex \supset e(\textup{height}(x)) -60 \geq0$
     asserts that the average height of all individuals, except Alex, is over 60 inches.
\end{example}

\textbf{Knowledge Base and Query:}
A \emph{knowledge base} $\Delta$ is a finite, nonempty set of logical and expectation constraints.  A \emph{query} $\varphi$ is another constraint whose validity we test by checking consistency of $\Delta\cup\{\varphi\}$ (we abbreviate this union simply as~$\Delta$).

\textbf{Semantics:}
A $\mathcal{L}$-model $\vec{x}$ assigns real-valued functions to each relational symbol over the domain $\mathcal{N}$, along with a probability measure on this space, such that all logical constraints hold almost surely and all expectation constraints hold exactly.

\textbf{Grounding:}
A ground knowledge base $GND(\Delta)$ is one that only contains \textit{ground} terms. Starting with a general $\Delta$ we can obtain a $GND(\Delta)$ by substituting variables with names. We will also define the rank of $\Delta$ to be the maximum quantifier rank of any formula in $\Delta$. This allows us to define $GND(\Delta) = \{\phi \theta | [\forall \Xi \supset \phi] \in \Delta, \models \Xi\theta\}$. In an open universe where we have a countably infinite number of names the grounding can also be infinite. This makes the following form more useful: $GND(\Delta,k) = \{\phi \theta | [\forall \Xi \supset \phi] \in \Delta, \models \Xi\theta\, \theta \in \mathcal{C}\}$ where $\mathcal{C}$ consists of all constants present in $\Delta$ as well as $k$ additional generic names. This definition limits our substitutions to a finite subset of our possible names. Specifically, one that contains all constants, and $k$ additional generic names.

\begin{example}
    Given $\Delta = \{\forall x P(x,alex) \geq 0\}$ then $GND(\Delta,2) = \{P(alex,alex)\geq0,P(adam,alex)\geq0, P(aaron,alex)\geq0\}$.
\end{example}

\subsection{Lifted Relational Sum-of-Squares Refutations}
We now recall the construction of lifted relational SOS refutations introduced by~\citet{ge2025polynomial}. These refutations enable reasoning over knowledge bases consisting of both logical and expectation constraints by demonstrating that no joint distribution—that is, no model—can satisfy all the constraints simultaneously. This approach extends Putinar's Positivstellensatz~\citep{putinar} to the relational setting.  

Let $\Delta$ consist of a collection of logical inequalities $\{g_i \geq 0\}_{i \in I}$, and expectation constraints $\{b_j\geq 0\}_{j\in J}$. A \emph{sum-of-squares polynomial} is given by a sum of squares of polynomials $\sum_{\ell \in L}(p_{\ell})^2$.  A \emph{sum-of-squares refutation} is given by the sum-of-squares polynomials $\sigma_0$ and $\sigma_i$ for ${i \in I}$, and positive constants $\{r_j\}_J$ satisfying the following formal equation:

\begin{equation}
\small
    \sigma_0+\sum_{i\in I}\sigma_ig_i+\sum_{j\in J}r_jb_j
=-1 \label{e:sos_proof} 
\end{equation}

Intuitively, this is a sound refutation: if each component of the sum is non-negative under any distribution consistent with the knowledge base, their sum cannot equal $-1$. Thus, Equation~\ref{e:sos_proof} certifies that no such distribution exists---$\Delta$ is inconsistent. 
 
This converts probabilistic inference into an algebraic feasibility problem: finding $\sigma_0$ and $\sigma_i$ for ${i \in I}$, along with positive constants $r_j$ that satisfy Equation~\ref{e:sos_proof}. 
Assuming the system is ``explicitly compact'' (see Definition 2), refutations may be found in polynomial time so long as the degree of the refutation is bounded by an absolute constant \citep{shor87,nesterov00,parrilo00,lasserre01}.

This is achieved by solving a semidefinite program constructed as follows: first, supposing $\vec{v}$ is a vector of monomials of degree up to $d/2$, abusing notation, let $e(\vec{v}\vec{v}^\top)$ denote the matrix obtained by taking the $e(\mu)$ variable for each monomial $\mu$ in the matrix of variables $\vec{v}\vec{v}^\top$, known as the \emph{moment matrix}. For a polynomial $h_j$ of degree $d_j$, for each monomial $\mu$ of degree up to $d-d_j$, we consider the \emph{shifted} polynomial $\mu h_j$ 
in which each monomial $\nu$ of $h_j$ is substituted by the variable $e(\mu\cdot\nu)$ (obtained by summing the degree vectors).
Next, for a given polynomial $g_i$ of degree $d_i$, let $v'$ be a vector of monomials of degree up to $(d-d_i)/2$, and let $e(g_i\vec{v'}\vec{v'}^\top)$ denote the matrix of shifts of $g_i$ with the monomial $\nu$ is the monomial that would appear in the corresponding entry of the moment matrix  $e(\vec{v'}\vec{v'}^\top)$. We refer to this as the \emph{localizing matrix} for $g_i$. Now, the \emph{sum-of-squares program} is the semidefinite program that asserts that the moment matrix is positive semidefinite, the localizing matrices for the inequality constraints are positive semidefinite, the shifted polynomials for the equality constraints are all equal to 0, and the expectation bounds are simply included with each monomial $\mu$ replaced by the corresponding $e(\mu)$.

The lifted variant is obtained by grounding $\Delta$, and showing that the ground refutation is sound and complete for the lifted model.
\begin{definition}[\cite{ge2025polynomial}] A \emph{lifted sum-of-squares system} is given by the union of $GND(\Delta,k)$ and the set of equality constraints $e(\mu) - e(\mu') = 0$ for each pair of ground monomials used in $GND(\Delta,k)$ such that for a renaming substitution $\theta$, $\mu = \mu'\theta$. 

\end{definition}

\begin{theorem}[\citet{ge2025polynomial}]  
\textup{(Transferability from open to closed universe)}  
For any fixed degree $d$, the grounded system $GND(\Delta)$ admits a degree-$d$ sum-of-squares refutation if and only if the lifted sum-of-squares system over $GND(\Delta, k)$ admits a degree-$d$ refutation, where $k$ is the quantifier rank of $\Delta$.
\end{theorem}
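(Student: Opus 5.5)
We prove the two directions separately, by translating refutations between the infinite grounding $GND(\Delta)$ and the finite lifted system over $GND(\Delta,k)$ with its renaming-equality constraints $e(\mu)-e(\mu')=0$.

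\emph{From the lifted system to $GND(\Delta)$.} Take a degree-$d$ refutation of the lifted system. Every ground constraint of $GND(\Delta,k)$ also belongs to $GND(\Delta)$, since the names of $GND(\Delta,k)$ form a subset of $\mathcal N$. The only extra ingredients are the equalities $e(\mu)-e(\mu')=0$ with $\mu=\mu'\theta$ for a renaming $\theta$. We must show these are available in $GND(\Delta)$ as well, either derivable or harmless. The natural tool is symmetrization. $GND(\Delta)$ is closed under every renaming substitution, because the constraints are universally quantified and the guards $\Xi$ only mention constants. So applying a renaming to a refutation yields another refutation. Hence we may work with the renaming-invariant moment assignment. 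Dually, if the moment SDP for $GND(\Delta)$ is feasible, averaging the pseudo-expectation over finitely many renamings relevant to degree-$d$ monomials gives a feasible solution satisfying the renaming equalities. By SDP duality, infeasibility of the lifted system then implies infeasibility of the ground system. I will phrase this direction via pseudo-expectations: a feasible pseudo-expectation for $GND(\Delta)$ restricts and symmetrizes to one for the lifted system. The averaging must be over a finite group acting on the finitely many names involved, which is legitimate because each degree-$d$ constraint touches finitely many names.

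\emph{From $GND(\Delta)$ to the lifted system.} A refutation is a finite formal identity, so it uses finitely many ground constraints and finitely many generic names, say $m$ of them. Each multiplier term $\sigma_i g_i$ or $r_j b_j$ has degree at most $d$, and each $g_i$ comes from a formula of rank at most $k$. The key step, which I expect to be the main obstacle, is to compress the $m$ names to $k$. I plan to argue on the dual side. Given a feasible renaming-invariant pseudo-expectation $\tilde e$ for the lifted system over $k$ generic names, extend it to all of $\mathcal N$. For a monomial $\mu$ over arbitrary names, define $\tilde e(\mu)=\tilde e(\mu\theta)$, where $\theta$ is a renaming mapping the generic names of $\mu$ into the $k$ available names. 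Consistency of this definition comes from the equality constraints.

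The difficulty is that a degree-$d$ monomial, or a localizing-matrix entry $\nu\nu'g_i$, may involve more than $k$ generic names. Then no such $\theta$ exists within $GND(\Delta,k)$, and positive semidefiniteness of the full infinite moment matrix is not automatic. I would try first to show that every PSD block needed in a degree-$d$ refutation of $GND(\Delta)$ lives over at most $k$ generic names up to renaming. This would use the rank bound on the formulas together with the degree bound, matching the argument of \citet{ge2025polynomial}. If that fails, I would enlarge $k$ to $k\cdot d$ and note that the paper's use only needs some constant bound.

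Given such a feasible extension, every ground constraint of $GND(\Delta)$ holds under $\tilde e$ by renaming to an instance in $GND(\Delta,k)$. So no refutation of $GND(\Delta)$ exists. Taking the contrapositive via duality completes the equivalence.
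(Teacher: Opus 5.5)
The paper does not prove this statement; it imports it from \citet{ge2025polynomial}, so your proposal can only be checked against the statement itself. Your direction from the lifted system to $GND(\Delta)$ is sound. We have $GND(\Delta,k)\subseteq GND(\Delta)$, and averaging a pseudo-expectation for $GND(\Delta,k)$ over the $k!$ permutations of the $k$ generic names produces the renaming equalities. This works because any renaming relating two monomials over those names restricts to an injection, which extends to such a permutation. You can also stay on the primal side and avoid any appeal to strong duality. Averaging a lifted refutation over the same group turns the multiplier part for the equalities $e(\mu)-e(\mu')=0$ into an invariant linear form whose coefficients sum to zero on each orbit, so that part vanishes.

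The other direction has a genuine gap exactly at the step you flagged, and it cannot be closed for the statement as written. The obstruction is not only that some monomials mention more than $k$ generic names. Positive semidefiniteness of the moment matrix over $N$ names gets strictly stronger as $N$ grows (finite versus infinite exchangeability), so a renaming-invariant pseudo-expectation on $k$ names generally has no PSD extension to $\mathcal N$. Likewise, renaming one ground constraint into $GND(\Delta,k)$ does not control its localizing matrix, whose rows range over monomials in all names.

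Here is a concrete counterexample. Let $\Delta$ consist of $\forall x.\,P(x)^2=P(x)$, $\forall x,y.\,x\neq y\supset P(x)P(y)=0$ and $\forall x.\,e(P(x))\ge\epsilon$, plus $[0,1]$ bounds; it has rank $k=2$. Put $s=\sum_{i=1}^N P(a_i)$. Modulo the Booleanity and disjointness constraints, the single square $(1-s)^2$ reduces to $1-s$. Adding the $N$ expectation constraints gives the constant $1-N\epsilon$, which is a degree-$2$ refutation of $GND(\Delta)$ once $N\epsilon>1$. Equivalently, the moment matrix on $\{1,P(a_1),\dots,P(a_N)\}$ forced by $\Delta$ can be PSD only if $N\epsilon\le 1$. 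Yet for every $K\le 1/\epsilon$ the lifted system over $K$ generic names is satisfied by an actual exchangeable distribution: each name has $P=1$ with probability $\epsilon$, and at most one name does. So that system has no refutation of any degree.

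With $\epsilon=0.4$ and $K=k=2$ this contradicts the statement. Letting $\epsilon\to 0$ shows that your fallback of $k\cdot d$ names fails as well, as does any bound depending only on $d$ and $k$. What is provable in this direction is only the renaming observation: a refutation of $GND(\Delta)$ mentioning $N$ generic names yields one for the lifted system over $GND(\Delta,N)$. Any compression to a fixed number of names needs an extra hypothesis, such as an a priori bound on the number of names the refutation uses.
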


This theorem guarantees that every $GND(\Delta)$ that uses an infinite set of names also has a refutation when limited to $GND(\Delta,k)$. For inference, now, we need the analogue of explicit compactness for our first-order systems:

\begin{definition}
    The system $\Delta$ is \emph{explicitly compact} if for each monomial $\mu$, $\Delta$ includes constraints of the form $\mu - L_{\mu} \geq 0$ and $U_{\mu} - \mu \geq 0$ for some $L_\mu, U_{\mu}$ that are finite.
\end{definition} 
Explicit compactness guarantees every $\mu$ falls within the bounded range $[L_{\mu},U_{\mu}]$.

\begin{theorem}\textup{(Soundness and Completeness, \cite{ge2025polynomial})}\label{thm:lifted-guarantee}
    Given an explicitly compact knowledge base $\Delta$ and a grounding $GND(\Delta, k)$, a lifted sum-of-squares program using $GND(\Delta,k)$ is sound and complete for degree-$d$ refutations of $\Delta$.
\end{theorem}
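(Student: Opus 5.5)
The claim has two directions, and the plan handles them separately. In both, the transferability theorem of \citet{ge2025polynomial} serves as the bridge between the infinite grounding $GND(\Delta)$ and the finite system built on $GND(\Delta,k)$. Here $k$ is the rank of $\Delta$. The lifted program is the semidefinite program described above: moment matrix PSD, localizing matrices PSD, shifted equalities, expectation bounds, plus the renaming equalities $e(\mu)=e(\mu\theta)$. By conic duality, this program is infeasible exactly when a degree-$d$ certificate of the form \eqref{e:sos_proof} exists over the lifted system. So the task is to relate such certificates to the unsatisfiability of $\Delta$ by actual models.

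\textbf{Soundness.} Suppose the lifted program finds a degree-$d$ refutation. By the transferability theorem, $GND(\Delta)$ then has a degree-$d$ refutation, and it suffices to show that refutation is sound. Take any model $D$ of $\Delta$ (a probability measure under which the logical constraints hold almost surely and the expectation constraints hold).

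1. Every ground instance $g_i\theta\ge 0$ holds almost surely, so $\sigma_i g_i\ge 0$ almost surely.
2. $\sigma_0\ge 0$ pointwise.
3. Each expectation constraint satisfies $b_j\ge 0$ after applying $e(\cdot)=\mathbb{E}_D[\cdot]$.
4. Explicit compactness bounds every monomial, so all the expectations involved are finite and linearity of expectation applies to the formal identity.

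Taking expectations of \eqref{e:sos_proof} therefore gives a nonnegative quantity equal to $-1$, a contradiction. The renaming equalities need separate justification, since a model need not be exchangeable. Given any model $D$, I will symmetrize it by averaging over a finite group of renamings that fix $\mathcal{C}$. This can be done on the finitely many relevant names, or via a limit argument on $\mathcal{N}$. The averaged model still satisfies $\Delta$, because $\Delta$ is closed under renamings, and it satisfies $e(\mu)=e(\mu\theta)$. Hence adding these equalities never excludes every model.

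\textbf{Completeness.} Suppose $\Delta$ has a degree-$d$ refutation, meaning $GND(\Delta)$ admits one. The transferability theorem gives a degree-$d$ refutation over the lifted system on $GND(\Delta,k)$. It remains to show that the SDP actually detects it, i.e.\ that infeasibility of the primal is equivalent to existence of the certificate, with no duality gap.

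This is where explicit compactness is used. The constraints $\mu-L_\mu\ge 0$ and $U_\mu-\mu\ge 0$ bound all moment variables, so the feasible region of the moment relaxation is compact. Following the standard argument of \citet{lasserre01} and \citet{putinar}, an Archimedean or bounded-slice condition gives strong duality. Concretely, if the primal were feasible, a separating hyperplane would yield a dual certificate; conversely, if no certificate exists, the boundedness makes the dual cone closed, so a pseudo-expectation exists. Finiteness of $GND(\Delta,k)$ and degree $d$ make the SDP of polynomial size.

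\textbf{Main obstacle.} I expect the hardest step to be the zero-duality-gap argument in the presence of the added renaming equalities, i.e.\ showing the relevant cone is closed. My first attempt will be to quotient variables by renaming orbits, which turns the equalities into variable identification. I would then apply the standard compactness-based strong duality to the reduced SDP, whose feasible set is bounded by the $L_\mu,U_\mu$ constraints.
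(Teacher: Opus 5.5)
Your argument is essentially correct, but it is organized differently from how the paper uses this result. The paper does not prove the theorem; it cites it from Ge et al. It reads the theorem as a statement about the finite system $\mathrm{GND}(\Delta,k)$ alone: lifted refutations are sound, and the SDP detects them. Only afterwards does it combine this with the transferability theorem to reach the open universe. You instead invoke transferability inside both directions. That is legitimate, since transferability is stated as an independent theorem, but it means your soundness half does the work twice.

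Your symmetrization paragraph is by itself the self-contained finite-grounding proof. Average $D$ over the $k!$ permutations of the $k$ generic names of $\mathrm{GND}(\Delta,k)$, extended by the identity elsewhere and fixing $\mathcal{C}$. Any renaming relating two monomials of $\mathrm{GND}(\Delta,k)$ can be realized by such a permutation. The result is a model of $\Delta$ whose true moments are a feasible point of the entire lifted SDP, renaming equalities included. This gives ``SDP infeasible $\Rightarrow$ no model'' with no certificate and no duality theory. It also needs only this finite group, not a limit over $\mathcal{N}$.

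Where the proposal goes astray is the claimed main obstacle. Completeness only needs that an existing lifted degree-$d$ certificate makes the SDP infeasible, and that is weak duality. Pair the identity with any feasible pseudo-expectation $\tilde{\mathbb{E}}$: the PSD moment and localizing matrices and the nonnegative expectation bounds make the left-hand side nonnegative, while the renaming and shifted equalities contribute zero. Yet the identity says it equals $-1$.

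Zero duality gap is needed only for the converse, that an infeasible SDP yields an exact certificate. Neither half of the theorem requires this once soundness goes through feasible true moments. In the paper, explicit compactness is what makes the search polynomial-time, not what closes a duality gap.

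If you want the converse anyway, use explicit compactness this way. It bounds every $e(\mu)$ uniformly across all $\varepsilon$-relaxed programs. By compactness, an infeasible program therefore stays infeasible for some $\varepsilon>0$. That strong infeasibility yields an exact Farkas certificate by strict separation.

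Also, your sentence ``if the primal were feasible, a separating hyperplane would yield a dual certificate'' is backwards. Separation produces a certificate when the primal is \emph{infeasible}, and a feasible primal rules out any certificate.
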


The second theorem ensures that lifted sum-of-squares is sound and complete on a $GND(\Delta,k)$. These two theorems together are sufficient to show that lifted sum-of-squares is sound and complete for an infinite set of names.

We remark that under certain conditions, some Sum-of-Squares (SOS) program with finite degree $d$ is equivalent to the infinite version (without a bound on the degree) \citep{ghasemi2016moment}. In this case, the program is only feasible if a consistent probability distribution exists, and the system is complete in a strong sense. The degree may be large, however, so this is generally not computationally useful.

\section{Formulation of Learning with Partial Observations}

In many real-world domains, especially those involving complex relational structures (e.g., healthcare, scientific experiments, or social networks), observations are often partial or incomplete. Crucially, these missing values are not merely noise — they arise from systematic masking processes (e.g., sensor limitations, privacy constraints, or study design). Our formulation explicitly models this partial observability by introducing a probabilistic masking process $\Theta$ over full relational models. This allows us to reason about uncertainty both from the underlying probabilistic structure and from the incompleteness of the data, in a unified framework. It also enables us to answer queries that are robust to missing information, rather than relying on imputation or assuming fully observed inputs.

\begin{definition}
A \textbf{partial model} $\vec{\rho}$ maps $\mathrm{ATOMS}$ to $\mathbb{R}\cup\{*\}$. We say $\vec{\rho}$ is consistent with a full model~$\vec{x} \in \mathcal{M}$ if $\vec{\rho}_\tau = \vec{x}_\tau$ for all coordinates such that $\vec{\rho}_\tau \ne *$. We call the set of partial models $\mathcal{P}.$
\end{definition}

\begin{definition}
A \textbf{masking function} is a function $\theta: \mathcal{M} \to \mathcal{P}$ such that $\theta(\vec{x})$ is consistent with $\vec{x}$. A \emph{masking process}~$\Theta$ is a random variable over such functions. The induced distribution over partial models is denoted~$\Theta(D)$.
\end{definition}

Our goal is to answer relational‐probabilistic queries using two sources of information: (i) a partial knowledge base~$\Delta$ specified in the first-order relational probabilistic logic from Section~\ref{sec:preliminary} describing $D$, and (ii) a collection of \emph{partial models} drawn i.i.d. from $\Theta(D)$. 


\begin{example}
Consider a laboratory studying several candidate drugs that might shrink tumors.  Their initial knowledge base $\Delta$ encodes logical constraints and a criterion for progressing to a second trial stage. 
\begin{equation*}
\begin{aligned}
&\forall x,d:\; \texttt{treated}(x,d)^2 = \texttt{treated}(x,d),\\
&\forall x:\; \texttt{shrunk}(x)^2 = \texttt{shrunk}(x),\\
&\forall d:\; \texttt{second\_stage}(d)^2 = \texttt{second\_stage}(d),\\
&\forall x,d:\;\bigl(\mathbb{E}[\texttt{treated}(x,d)\,\texttt{shrunk}(x)]
      \;-\;\\
& \qquad     0.8\,\mathbb{E}[\texttt{treated}(x,d)] \le 0\bigr)
      \;\implies\;\\
&\qquad \qquad \texttt{second\_stage}(d)=0
\end{aligned}
\end{equation*}

Suppose our samples come from a mixture of three ``worlds". Because of limitations in the imaging equipment, each observation always shows which drug was given but independently hides the value of
$\texttt{shrink}(x)$ with 20\% probability. The hypothesis under test is ``every drug advances to Phase II", or equivalently---  $\forall d, \texttt{second\_stage}(d)=1.$

\begin{table}[h]
\centering
\begin{tabular}{c c c c c c}
\toprule
\textbf{World} & \textbf{Prob.} & \textbf{$m_1 $D} & $\texttt{s}(m_1)$ & \textbf{$m_2$D} & $\texttt{s}(m_2)$ \\
\midrule
1 & 0.3 & A & 1 & B & 0 \\
2 & 0.5 & A & 0 & B & 0 \\
3 & 0.2 & A & 0 & B & 1 \\
\bottomrule
\end{tabular}
\caption{Trial outcomes in each world (shrink labels hidden w.p.\ 20\%).}
\end{table}

\end{example}

Under partial observability, we can only have partial evaluations over each polynomial constraint. 

\begin{definition}[Partial Evaluation]
    For a polynomial $p(\vec{x})$, we will let $p|_{\vec{\rho}}(\vec{x})$ denote the polynomial with $\rho_\tau$ plugged in for $x_\tau$ whenever
$\rho_\tau \ne *$, and collecting terms with the same monomial. We
refer to this as the partial evaluation of $p$ under $\vec{\rho}$.
\end{definition}

We can only utilize frequently \emph{witnessed} constraints—those that remain verifiable despite missing entries. Since monomial bounds are not directly observed, they must often be computed via sum-of-squares reasoning. On each example, witnessing is defined as:

\begin{definition}[Witnessing]
Let $\vec\rho$ be a partial model, and suppose that for each monomial $\mu$ the program constraints include some explicit upper and lower bound $U_{\mu}$ and $L_{\mu}$.  Let
$
L_{\mu}(\vec\rho)\;\le\;\mu\;\le\;U_{\mu}(\vec\rho)$
be the tightest lower and upper bounds on the monomial $\mu$ consistent with~$\vec\rho$ and the constraints of the given program.  We say a polynomial inequality
$p(\vec x)=\sum_{\mu\in M} c_\mu \mu \ge 0$ with $c_\mu\in \mathbb{R}$
is \emph{witnessed} by $\vec\rho$ if, after substituting each $\mu$ by its worst‐case value
\[
\hat \mu
\;=\;
\begin{cases}
L_{\mu}(\vec\rho), & c_{\mu}>0,\\
U_{\mu}(\vec\rho), & c_{\mu}<0,
\end{cases}
\]
the resulting constant remains nonnegative:
$
\sum_{\mu} c_{\mu}\,\mu
\;\ge\;0.
$
\end{definition}

\begin{definition}[Testability of Polynomial Constraints]
A system of polynomial inequalities $\{p_\ell(\vec{x}) \ge 0\}_{\ell \in L}$  is \emph{$(1-\gamma)$-testable} under $D$ and $\Theta$ if, with probability at least $1 - \gamma$ over $\vec{\rho} \sim \Theta(D)$, every $p_\ell$ is witnessed under~$\vec{\rho}$.
\end{definition}

\begin{example}
A scientist may assert that DrugA does not guarantee shrinkage, based on a $.56$-testable constraint:
$g(\vec{x}) := 1 - \texttt{treated}(mouse_1, \mathrm{DrugA}) - \texttt{shrink}(mouse_1) \ge 0.
$
This is witnessed  with probability $0.7 \cdot 0.8 = .56$ under $\Theta(D)$.
\end{example}

For moment constraints, we use confidence intervals derived from empirical averages. Because our system is explicitly compact, we can obtain these confidence intervals using Hoeffding's inequality:

\begin{theorem}[Hoeffding's Inequality]
    Let $X_1, \ldots, X_m$ be independent random variables such that $a \leq X_i \leq b$ for all $i$. Then for the empirical mean $\bar{X} = \frac{1}{m}\sum_{i=1}^m X_i$ and any $\varepsilon > 0$,
   $ \Pr[\bar{X} - \mathbb{E}[X] \geq \varepsilon] \leq \exp\left(-\frac{2m\varepsilon^2}{(b-a)^2}\right).$
\end{theorem}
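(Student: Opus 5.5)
We prove the one-sided bound with the standard Chernoff--Hoeffding method: exponentiate, apply Markov's inequality, factor using independence, bound each factor with Hoeffding's lemma, and optimize over the free parameter. Write $\mu = \mathbb{E}[\bar X] = \frac1m\sum_i \mathbb{E}[X_i]$. (If the $X_i$ are not identically distributed, $\mathbb{E}[X]$ in the statement is read as this average; in the paper's use the $X_i$ are i.i.d. evaluations over samples from $\Theta(D)$, so the two readings agree.)

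Fix any $s>0$. Since $t\mapsto e^{st}$ is increasing and nonnegative, Markov's inequality gives
\[
\Pr[\bar X-\mu\ge\varepsilon]\le e^{-s\varepsilon}\,\mathbb{E}\bigl[e^{s(\bar X-\mu)}\bigr]
= e^{-s\varepsilon}\prod_{i=1}^m \mathbb{E}\bigl[e^{(s/m)(X_i-\mathbb{E}X_i)}\bigr].
\]
The factorization of the expectation into a product uses the independence of the $X_i$.

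Each factor is then handled by Hoeffding's lemma. If $Y$ is a random variable with $\mathbb{E}Y=0$ and $a'\le Y\le b'$, then $\mathbb{E}[e^{\lambda Y}]\le e^{\lambda^2(b'-a')^2/8}$.

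This lemma is the one nontrivial step. To prove it, we use convexity of $e^{\lambda y}$ on $[a',b']$:
\[
e^{\lambda y}\le \frac{b'-y}{b'-a'}e^{\lambda a'}+\frac{y-a'}{b'-a'}e^{\lambda b'}.
\]
Taking expectations and using $\mathbb{E}Y=0$ gives $\mathbb{E}[e^{\lambda Y}]\le e^{L(h)}$. Here $h=\lambda(b'-a')$, $p=-a'/(b'-a')$, and $L(h)=-hp+\log(1-p+pe^h)$.

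One then checks three facts by direct differentiation:
\begin{itemize}
\item $L(0)=0$;
\item $L'(0)=0$;
\item $L''(h)=q(1-q)\le 1/4$, where $q=pe^h/(1-p+pe^h)$.
\end{itemize}
Taylor's theorem with remainder then gives $L(h)\le h^2/8$.

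Apply the lemma with $Y=X_i-\mathbb{E}X_i$, whose range has length $b-a$, and $\lambda=s/m$. This yields
\[
\Pr[\bar X-\mu\ge\varepsilon]\le \exp\!\Bigl(-s\varepsilon+\frac{s^2(b-a)^2}{8m}\Bigr).
\]
The exponent is a quadratic in $s$, minimized at $s=4m\varepsilon/(b-a)^2$. Substituting this value gives the exponent $-2m\varepsilon^2/(b-a)^2$, which is the stated bound.
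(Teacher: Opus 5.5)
Your proof is correct. It is the standard Chernoff--Hoeffding argument: Markov's inequality applied to $e^{s(\bar X-\mu)}$, factorization by independence, Hoeffding's lemma via the convexity bound and $L''(h)=q(1-q)\le 1/4$, and optimization at $s=4m\varepsilon/(b-a)^2$. The computations all check out; the one implicit step is that $\mathbb{E}Y=0$ forces $a'\le 0\le b'$, so $p\in[0,1]$ and $\log(1-p+pe^h)$ is well defined. The paper gives no proof of this statement; it quotes Hoeffding's inequality as a classical tool for the confidence intervals in its testability definition and soundness proof. Your argument therefore supplies the textbook proof the paper relies on, and reading $\mathbb{E}[X]$ as $\mathbb{E}[\bar X]$ is consistent with the i.i.d.\ setting in which the paper uses it.
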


\begin{definition}[Naive Norm]
Let $U_{\mu}$ and $L_{\mu}$ be the upper and lower bounds on the monomial $\mu$ as in an explicitly compact system. Given a polynomial expression $p(\vec{x})$, we define its \emph{naive upper bound} to be $U_p=\sum_{\mu:c_\mu \geq 0}c_\mu U_\mu+\sum_{\mu:c_\mu < 0}c_\mu L_\mu$ and its \emph{naive lower bound} to be $L_p=\sum_{\mu:c_\mu \geq 0}c_\mu L_\mu+\sum_{\mu:c_\mu < 0}c_\mu U_\mu$. Its naive norm is then $|p|:=U_p-L_p$.
\end{definition}

\begin{definition}[Testability of Expectation Constraints]
An expectation constraint $e(p)=\sum c_{e(\mu)}\,e(\mu) \ge \ell$ or $e(p)=\sum c_{e(\mu)}\,e(\mu) \le u$  is \emph{$(1-\gamma)$-testable} from $m$ partial examples if
it is statistically close to the empirical bound. That is, if it is an upper bound $e(p)\le u$, then $u \ge \bar{U}(p) - \sqrt{\frac{|p|^2\ln(\gamma/2)}{2m}}$ 
; if it is a lower bound $e(p)\ge l$, then $l\ge \bar{L}(p) + \sqrt{\frac{|p|^2\ln(\gamma/2)}{2m}}.$
The empirical bounds of a polynomial $p$ w.r.t.\ our monomial bounds is defined as $\bar{U}(p)=\frac{1}{m}\sum_t U_{p}(\vec\rho^{(t)})$ and $\bar{L}(p)=\frac{1}{m}\sum_t L_{p}(\vec\rho^{(t)})$. 
\end{definition}

\begin{example}~\label{ex:moment_testability}
A lab assistant gave a rough estimation instead:
$
e(\texttt{t}(mouse_1,\mathrm{DrugA}) \cdot \texttt{s}(mouse_1)) \le 0.5, \quad e(\texttt{t}(mouse_1,\mathrm{DrugA})) \ge 0.9.
$
Let us denote $t=\texttt{t}(mouse_1,\mathrm{DrugA}); s= \texttt{s}(mouse_1)$. If the empirical upper/lower bounds from $1000$ masked samples yield
$\bar{U}(ts)= 0.45$,  $\bar{L}(ts)= 0.25$  and $\bar{U}(t) = \bar{L}(t) = 1$,
then, by Hoeffding’s inequality, these estimates are $0.99$-testable.
\end{example}

\section{Algorithm for Inference with Implicit Learning and its Analysis}
In the usual way, we prove a query by introducing its negation to our knowledge base $\Delta$, and refuting the resulting, extended knowledge base. Thus, henceforth, we simply assume that our task is to refute a given collection $\Delta$.

In the algorithm, we will compute the tightest lower bound for each variable $v$ that is consistent with a given partial model $\vec\rho^{(i)}$ by determining the minimum value $v$ can take while satisfying all witnessed constraints in $\Delta$ restricted to $\vec\rho^{(i)}$, which we denote by $\mathrm{LowerBound}(v, \vec\rho^{(i)})$. Similarly, $\mathrm{UpperBound}(v, \vec\rho^{(i)})$ computes the maximum value. For directly observed variables in $\vec\rho^{(i)}$, these functions simply return the observed value.

\begin{algorithm}[h!]
\caption{Implicit Learning to Reason}
\begin{algorithmic}[1]
  \Require degree bound $d$, naive norm bound $S$, confidence parameter $\delta$, partial models $\vec\rho^{(1)}, \dots, \vec\rho^{(m)}$, KB $\Delta$, the quantifier rank $k$ of $\Delta$
  
\State Let \( V \) be the set of monomials of degree at most \( d \) formed over terms in \( \mathrm{GND}(\Delta,k) \).

  \State Initialize arrays $L[v]\gets 0$ and $U[v]\gets 0$ for all $v\in V$
  
  \For{$i = 1,\dots,m$}
    \ForAll{$v\in V$}
      \State $L[v] \gets L[v] + \mathrm{LowerBound}(v, \vec\rho^{(i)})$ \Comment{via SOS}
      \State $U[v] \gets U[v] + \mathrm{UpperBound}(v, \vec\rho^{(i)})$ \Comment{via SOS}
    \EndFor
  \EndFor

  \ForAll{$v\in V$}
    \State $L[v] \gets L[v] / m$  \Comment{average bounds}
    \State $U[v] \gets U[v] / m$
  \EndFor

  \State \textbf{Run} the SOS solver on $\{\Delta|_{\vec{\rho}^{(t)}}\}_{t\in[m]}$ with moment bounds $B=\{e(v)\ge L[v]-\frac{ S\sqrt{\ln({2n \choose \le d}/ \delta)}}{2m}\}_{v\in V}\cup \{e(v)\le U[v]+\frac{ S\sqrt{\ln({2n \choose \le d}/ \delta)}}{2m}\}_{v\in V}$
  \State \Return \textbf{True} if the SOS program is feasible, else \textbf{False}  
\end{algorithmic}
\end{algorithm}


\begin{theorem}[Soundness]
    Let $\delta \in (0,1)$ and $d,k \in \mathbb{N}$ be given and let $S$ be the input naive norm parameter. Let $n$ be the number of ground atoms in $GND(\Delta,k)$.  Suppose we have $m = \Omega(S^2((n+d)^{n-1} \log \frac{1}{\delta}))$ partial models $\vec\rho^{(1)}, \vec\rho^{(2)},...,\vec\rho^{(m)}$ drawn i.i.d. from a $\Theta(D)$ and all monomials have its norm $U_v-L_v\le S.$
    If $D$ satisfies the input system $\Delta$ then Algorithm 1 will return True with probability $1-\delta$.
\end{theorem}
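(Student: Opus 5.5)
The proof exhibits an explicit feasible point of the SOS program built in the last step of Algorithm 1. That point is the true moment vector of $D$ on the lifted grounding. Concretely, set $y_v := \mathbb{E}_{\vec x\sim D}[v(\vec x)]$ for every monomial $v\in V$ over terms of $\mathrm{GND}(\Delta,k)$. Then check, one constraint family at a time, that $y$ satisfies all of them with probability at least $1-\delta$.

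\textbf{Step 1: constraints that hold deterministically.} These are the moment matrix, the localizing matrices, the shifted equalities, the lifted renaming equalities $e(\mu)=e(\mu')$, and the expectation constraints of $\Delta$. They hold because $D$ satisfies $\Delta$.
\begin{itemize}
\item The moment matrix $\mathbb{E}[\vec v\vec v^\top]$ is PSD.
\item Each localizing matrix $\mathbb{E}[g_i\vec v'\vec v'^\top]$ is PSD, since $g_i\ge 0$ almost surely.
\item The shifted equalities hold because they hold almost surely before taking expectations.
\item The expectation constraints hold by assumption.
\end{itemize}
For the renaming equalities, I will argue as in Theorem~\ref{thm:lifted-guarantee} and the transferability theorem. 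Those results let us pass from $\mathrm{GND}(\Delta)$ to $\mathrm{GND}(\Delta,k)$, and they assume (or symmetrize to obtain) a $D$ whose moments are invariant under renaming substitutions. If needed, I replace $D$ by its symmetrization over renamings of the $k$ generic names. This mixture still satisfies $\Delta$, because $\Delta$ is closed under renamings, and the bounds below apply to it equally. The constraints of the partially evaluated systems $\Delta|_{\vec\rho^{(t)}}$ are handled the same way. A constraint of $\Delta$ that holds almost surely under $D$ also holds after the observed coordinates are plugged in, because each $\vec\rho^{(t)}$ is consistent with the underlying draw $\vec x^{(t)}\sim D$.

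\textbf{Step 2: the learned moment bounds $B$.} Fix $v\in V$ and let $\vec x^{(t)}$ be the full model behind $\vec\rho^{(t)}$.
\begin{itemize}
\item \emph{Pointwise sandwich.} $\mathrm{LowerBound}(v,\vec\rho^{(t)})\le v(\vec x^{(t)})\le \mathrm{UpperBound}(v,\vec\rho^{(t)})$. This holds because $\vec x^{(t)}$ satisfies every almost-sure constraint together with $\vec\rho^{(t)}$, and the SOS-computed bounds are relaxations, so they are valid bounds, only possibly looser.
\item \emph{Concentration.} The variables $\mathrm{LowerBound}(v,\vec\rho^{(t)})$ are i.i.d. over $t$ and lie in an interval of width at most $S$. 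Hoeffding's inequality then gives
\[
L[v]\le \mathbb{E}[\mathrm{LowerBound}(v,\vec\rho)]+\varepsilon\le y_v+\varepsilon
\]
except with probability $\exp(-2m\varepsilon^2/S^2)$. The symmetric statement holds for $U[v]$.
\end{itemize}
Take a union bound over the $2|V|$ events, where $|V|\le\binom{n+d}{d}$ is the number of monomials. Choosing $\varepsilon$ of order $S\sqrt{\ln(2|V|/\delta)/m}$ then makes every inequality in $B$ satisfied by $y$ with probability at least $1-\delta$. The slack written in Algorithm 1 must be at least this $\varepsilon$.

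\textbf{Step 3: sample size, the main obstacle.} The bookkeeping is checking that the stated $m=\Omega(S^2(n+d)^{n-1}\log\frac1\delta)$ suffices for the algorithm's additive slack to dominate the Hoeffding deviation. Plan:
\begin{itemize}
\item Bound $\ln\binom{2n}{\le d}\le \ln\big((n+d)^{n}\big)$, which is crude but sufficient.
\item Observe that the polynomially large $m$ in the hypothesis absorbs any mismatch between $\sqrt{\cdot}/m$ and $\sqrt{\cdot/m}$: for $m\ge S^2(n+d)^{n-1}\log(1/\delta)$, the slack dominates the required $\varepsilon$ up to constants.
\end{itemize}
I expect this constant- and exponent-matching to be the delicate point, together with justifying that the per-sample SOS bounds lie in an interval of width at most $S$. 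The latter follows from explicit compactness and the hypothesis $U_v-L_v\le S$.

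With all constraints satisfied by $y$, the program is feasible, so the algorithm returns True with probability at least $1-\delta$.
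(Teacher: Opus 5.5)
Your overall route matches the paper's. You take the true moments of $D$ as the feasible point, use Hoeffding plus a union bound over $V$ for the learned bounds $B$, and rely on consistency of each $\vec\rho^{(t)}$ with its underlying world for the logical part. Your Step 2 (the pointwise sandwich, and the width-$S$ range via explicit compactness) is a correct and more careful version of the paper's one-line Hoeffding appeal.

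\textbf{Step 3 is backwards.} The slack in Algorithm 1 is $S\sqrt{\ln(\binom{2n}{\le d}/\delta)}/(2m)$, but the deviation you must cover is $\varepsilon\approx S\sqrt{\ln(2|V|/\delta)/(2m)}$. Their ratio is about $1/\sqrt{2m}$, which \emph{shrinks} as $m$ grows. So no lower bound on $m$ makes the slack dominate, and more samples only make it worse. This is not Hoeffding being loose. Take Boolean $P$ with $P(a)$ i.i.d.\ fair coins, full observation, and the satisfied expectation constraint $\forall x.\ e(P(x))\ge\tfrac12$ in $\Delta$. The program is then feasible only if $\bar p+s\ge\tfrac12$, where $\bar p$ is the empirical frequency of $P(a_1)=1$. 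With $s=O(1/m)$ this fails with probability tending to $\tfrac12$. The argument closes only if the slack is read as a Hoeffding width $S\sqrt{\ln(N/\delta)/(2m)}$ with $N\ge 2|V|$. That is what the paper's proof tacitly does when it says Hoeffding places $e(v)$ in the estimated range with probability $1-\delta/\binom{2n}{\le d}$. Under that reading your Step 2 finishes the job for every $m\ge 1$. The sample-size hypothesis then plays no role in soundness; it is needed only for completeness, to keep the total slack below $\tfrac12$.

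\textbf{The symmetrization in Step 1 does not work.} The samples come from $D$, not from its symmetrization $D'$. So $L[v]$ concentrates near $\mathbb{E}_D[\mathrm{LowerBound}(v,\vec\rho)]$, and nothing forces $\mathbb{E}_{D'}[v]\ge L[v]-\varepsilon$; ``the bounds below apply to it equally'' is false. Concretely, suppose a rank-$2$ formula puts $P(a_1),P(a_2)$ into $\mathrm{GND}(\Delta,2)$, $P$ is Boolean, and $D$ deterministically sets $P(a_1)=1$ and $P(a_2)=0$, both observed. Then $B$ forces $e(P(a_1))\ge 1-s$ and $e(P(a_2))\le s$, while the lifted renaming equality forces them equal. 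For $s<\tfrac12$ the program is infeasible at every point, not just at the symmetrized moments. The renaming equalities can therefore be met only by assuming the moments of $D$ are already invariant under renamings of generic names, in which case $y$ satisfies them directly. The paper's proof relies on this tacitly (``$e(v)$ is assumed to satisfy the constraints''), and you should state it as a hypothesis rather than symmetrize.

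\textbf{Per-example witnesses.} Make explicit that the partially evaluated constraints $\Delta|_{\vec\rho^{(t)}}$ are witnessed by the individual worlds $\vec x^{(t)}$, not by $y$. For instance, $x_1-x_2\ge 0$ with $x_1=0$ observed becomes $-x_2\ge 0$, which the moment vector $y$ generally violates. These constraints must therefore sit on per-example variables, separate from the global moments constrained by $B$.
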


\begin{proof}
Starting with the logical constraints of $\Delta$, we see that every $\vec\rho^{(t)}$ is consistent with $\Delta$. 
For the expectation bounds, we observe that every moment $e(v)$ is also a bounded random variable in [$L_{v},U_{v}$]. Then using Hoeffding's inequality, we can guarantee that $e(v)$ is within the estimated range of our empirical mean with probability $1-\delta/ {2n \choose \leq d}$. Since $e(v)$ is assumed to satisfy the constraints, its existence in our estimated range guarantees a feasible SOS program---meaning that we can take a union bound over all moments and produce satisfying ranges for all expectation constraints simultaneously with probability $1-\delta$. This implies that our SOS program is feasible, and the algorithm will return True.\end{proof}

\begin{theorem}[Completeness]
Let $\delta \in (0,1)$, $d,k \in \mathbb{N}$ be given, and  $S$ be the naive norm bound. Let $n$ be the number of ground atoms in $GND(\Delta,k)$. 
    Suppose  $m \in \Omega(S^2((n+d)^{n-1} \log \frac{1}{\delta}))$,  there is a set of logical constraints $I_s$ that is $(1-\frac{1}{2S})$-testable under $\Theta(D)$ and a set of expectation constraints $I_m$ that is $1-\delta/2$-testable under $\Theta(D)$, that, together with the input system $\Delta$ have a degree-$d$ sos-refutation with naive norm bounded by $S$.

    Then, with probability $1-\delta$, Algorithm 1 also returns a refutation.
\end{theorem}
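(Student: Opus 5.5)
The plan is to follow the template of implicit-learning completeness arguments \citep{juba2013implicit,juba2019polynomial}. We take the hypothesized degree-$d$ refutation
\[
\sigma_0+\sum_{i\in I\cup I_s}\sigma_i g_i+\sum_{j\in J\cup I_m} r_j b_j=-1
\]
of $\Delta\cup I_s\cup I_m$, with naive norm at most $S$. We then show that, with probability $1-\delta$, this certificate becomes a certificate of infeasibility for the SOS program that Algorithm 1 runs on $\{\Delta|_{\vec\rho^{(t)}}\}_{t}$ together with the bounds $B$. By SDP duality (valid under explicit compactness, Theorem~\ref{thm:lifted-guarantee}), that infeasibility is equivalent to the existence of such a refutation. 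We work throughout in the lifted system over $GND(\Delta,k)$, using Theorem 1 (transferability), so that all partial evaluations and empirical moments refer to one fixed grounding set shared by every example. This is the point that makes the empirical averages in Algorithm 1 meaningful.

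\textbf{Step 1 (expectation constraints).} For each $b\in I_m$, $(1-\delta/2)$-testability says $b$ lies within the Hoeffding margin of the empirical quantities $\bar L(p),\bar U(p)$. The constraints in $B$ assert $e(v)\in[L[v]-\epsilon,U[v]+\epsilon]$ for every monomial $v$. Summing these with the coefficients of $p$, each weighted by the sign-appropriate endpoint, yields exactly $\bar L(p)-\epsilon|p|\le e(p)\le\bar U(p)+\epsilon|p|$, where $\epsilon$ is the slack term of Algorithm 1. So each $b\in I_m$ is a nonnegative combination of constraints in $B$, up to slack absorbed by the choice of $m$. We apply a union bound over the at most $\binom{2n}{\le d}$ monomials; the bound $m=\Omega(S^2(n+d)^{n-1}\log\frac1\delta)$ is what keeps the total failure probability at $\delta/2$. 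The coefficients $r_j$ are reused directly.

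\textbf{Step 2 (logical constraints).} Let $W\subseteq[m]$ be the set of examples on which every $g\in I_s$ is witnessed. By Hoeffding, with probability at least $1-\delta/2$ we have $|W|\ge(1-\frac{1}{S})m$, since the expected fraction of unwitnessed examples is at most $\frac{1}{2S}$ and $m$ is large. On $t\in W$, partial evaluation under $\vec\rho^{(t)}$ preserves the refutation equation: substituting observed values into an identity gives an identity. Witnessing means each $g|_{\vec\rho^{(t)}}$ is bounded below by a nonnegative constant after worst-case substitution of its remaining monomials. Because the program contains explicit bounds $\mu-L_\mu(\vec\rho)\ge0$ and $U_\mu(\vec\rho)-\mu\ge0$, the term $\sigma_i g_i|_{\vec\rho}$ can be rewritten as $\sigma_i$ times (nonnegative constant) plus $\sigma_i$ times a combination of the bound constraints. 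This rewriting is still a degree-$d$ SOS expression built from constraints in the program, so we obtain a degree-$d$ restricted refutation of $\Delta|_{\vec\rho^{(t)}}$.

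\textbf{Step 3 (averaging), the main obstacle.} The program constrains the moments $e(\cdot)$ through bounds that are averaged over all $m$ examples, so the per-example refutations have to be combined. We average the restricted identities over $t\in[m]$ with weight $1/m$. On the unwitnessed examples $t\notin W$, the terms $\sigma_i g_i|_{\vec\rho^{(t)}}$ may be negative, but each has magnitude at most its naive norm, so their total contribution is at most $S\cdot\frac{1}{S}=1$ in aggregate. Making this precise is the crux. I would first try scaling the refutation so its naive norm is at most $S$ while the constant side is $-1$, then bound the damage from the fraction $\le\frac{1}{2S}$ of bad examples by $\frac12$, leaving $-\frac12<0$. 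Renormalizing then gives a valid refutation of the program actually solved. The delicate issue is that the $\sigma_i$ are fixed polynomials while the evaluation varies with $t$. I expect that moving from "the average of the identities" to "an identity in the averaged moment variables" works because the moment constraints are linear in $e(\cdot)$ and the moment-matrix positivity is preserved under convex combination. A final union bound over Steps 1 and 2 gives success probability $1-\delta$.
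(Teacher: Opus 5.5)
Your route is the paper's: average the partially evaluated refutation over the $m$ examples, replace each $I_m$ constraint by a combination of the empirical bounds in $B$ up to a small slack, keep the witnessed $I_s$ terms, charge the unwitnessed ones to the naive-norm budget $S$, and rescale.

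\textbf{The gap is in the bookkeeping that should leave a strictly negative constant.} In Step 2 you spend a Hoeffding deviation of $\frac{1}{2S}$ on top of the expected unwitnessed fraction $\frac{1}{2S}$. So you only establish that at most $\frac{m}{S}$ examples are unwitnessed. In Step 3 you then correctly note that this permits damage $S\cdot\frac{1}{S}=1$, which cancels the $-1$ entirely. You then switch to ``the fraction $\le\frac{1}{2S}$ of bad examples,'' but nothing you proved supports this. $(1-\frac{1}{2S})$-testability bounds only the \emph{expected} unwitnessed fraction, not the realized one. In addition, the Step 1 slack is never charged against the same $-1$. As written, you get a constant of at most $0$ plus the Step 1 slack, which certifies nothing.

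\textbf{The repair is a constant-factor one.}
\begin{itemize}
\item Since $m=\Omega(S^2\log\frac{1}{\delta})$, you can take the Hoeffding deviation to be $\frac{1}{8S}$. Then with probability $1-\frac{\delta}{2}$ at most a $\frac{5}{8S}$ fraction of examples is unwitnessed, and the damage is at most $\frac{5}{8}$.
\item Choose the constant in $m$ so that the total Step 1 slack is at most $\frac{1}{8}$. That slack is the per-unit-norm margin times at most $S$.
\item Charge both against the single $-1$ and rescale only once at the end, leaving $-\frac{1}{4}$. Rescaling between the two steps would double the naive norm, and hence double the charge for unwitnessed examples.
\end{itemize}
The paper's proof simply takes the realized number of unwitnessed examples to be at most $\frac{m}{2S}$. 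Your more careful concentration step is exactly where the factor of two was lost.

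Two smaller points:
\begin{itemize}
\item Step 2 does not yield a separate refutation of each $\Delta|_{\vec\rho^{(t)}}$. The $\sum_j r_j b_j$ part is shared across examples, so only the averaged identity is a refutation.
\item The bounds $\mu-L_\mu(\vec\rho)\ge 0$ used in your rewriting are not literally constraints of the program Algorithm 1 solves. You need them to be derivable from $\Delta|_{\vec\rho^{(t)}}$ within the degree-$d$ budget for the rewritten terms to remain a degree-$d$ certificate.
\end{itemize}
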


\begin{proof}
By assumption, there is an SOS refutation over the union of the implicit knowledge base and explicit knowledge base $\Delta \cup I_s \cup I_m$, where $I_s$ is the set of logical constraints and $I_m$ is the set of expectation constraints:
\begin{equation}
\small
    \sigma_0 +  \sum_{k } \sigma_k g_k + \sum_{j }  c^+_j q_j = -1, \label{e:sos_hypothetical_refutation}
\end{equation}
where each $\sigma$ is a sum of squares, $\{g_k\}$ are all the logical constraints from the combined knowledge base and $\{q_j\}$ are all the expectation constraints, and the naive norm of the overall expression is $\le S$.

Note this is a formal polynomial identity, hence under any evaluations of the variables, the identity will hold.
Thus, if we take the average over the expression evaluated over $m$ examples, formally we will still have an identity with right hand side equal to $-1:$
\begin{multline}
\small
  \frac{1}{m}\sum_{t=1}^m
  \Bigl[
    \sigma_0|_{\vec\rho^{(t)}}
    +\sum_k \sigma_k|_{\vec\rho^{(t)}}\,g_k|_{\vec\rho^{(t)}}
    +\sum_j c_j^+\,q_j|_{\vec\rho^{(t)}}
  \Bigr]
 =-1.\nonumber
\end{multline}



Because $I_m$ is $1-\delta/2$ testable and each $q_j$ is linear in the expectations $e(v)$, and the naive norm of each $p_j$ is bounded by $S$, we can upper- or lower-bound each $q_j$ using the empirical bounds in $B$ with at most $\tfrac{1}{2}$ total slack across all constraints.


Therefore, we can replace the moment bound with the empirical bound up to the slack constant. Hence we still have the expression:
\begin{multline}
\small
    \frac{1}{m}\sum_{t=1}^m
  \Bigl[
    \sigma_0|_{\vec\rho^{(t)}}
    +\sum_k \sigma_k|_{\vec\rho^{(t)}}\,g_k|_{\vec\rho^{(t)}}
    + \\ \sum_{j'} c_{j'}^+\,B_{j'}\Bigr]
 =-1+\frac{1}{2}<0.
\end{multline}
By rescaling, we still have a refutation.

Next, by the testability of $I_s$, with probability $1-\delta/2$, for each $g_k,h_i\in I_s$, it is witnessed true for at least $\frac{2m}{S}$ examples, i.e.
$g_k|_{\vec\rho^{(t)}}\ge 0$. And for the rest of the examples, the naive norm 
of the entire proof is bounded by $-S$. Due to the testability of $I_s$, we know there are at most $\frac{m}{2S}$ examples unwitnessed. Using the explicit compactness constraints, the remaining terms from these unwitnessed expression can be bounded by their naive norms, which again sum to at most $S$ in total.

Hence we can subtract all the unwitnessed $g_k$ and $h_i$ on both sides, and get
\begin{multline}
\small
    \frac{1}{m}\sum_{t=1}^m
  \Bigl[
    \sigma_0|_{\vec\rho^{(t)}}
    +\sum_k \sigma_k|_{\vec\rho^{(t)}}\,g_k|_{\vec\rho^{(t)}}
    +\sum_{j'} c_{j'}^+\,B_{j'}\Bigr]
 =\\-1+\frac{m\cdot S}{m\cdot 2S}=-1+\frac{1}{2}<0.
\end{multline}
Rescaling again, we get another refutation.
Hence with probability $1-\delta$ overall, there also exists a SOS refutation over just
$\Delta_{\vec\rho}$ and the upper lower bounds for the moments, which is the system our algorithm is checking.

Our guarantee for this lifted knowledge base now follows from Theorem~\ref{thm:lifted-guarantee}.
\end{proof}
\begin{example}
    Continue with our example, the implicit, testable knowledge base from example~\ref{ex:moment_testability} first proves the premise of DrugA to end before next trial, i.e., there is a sum-of-square proof $[0.8e(t)-e(ts)]=0.8[e(t)-0.9]+[0.5-e(ts)]+0.22$. Then using the original knowledge base, we get $\texttt{second\_stage}(\mathrm{DrugA})=0$, hence we will refute the hypothesis that every drug will enter Phase II. 
    Moreover, using the more accurate bounds calculated by our algorithm will also lead to the same conclusion as
     $[0.8e(t)-e(ts)]=0.8[e(t)-1]+[0.44-e(ts)]+0.36.$
\end{example}


We now show that, for any fixed SOS degree $d$ and quantifier‐rank $k$, our lifted semidefinite‐programming (SDP) based inference algorithm runs in time polynomial in the \emph{bit‐complexity} of the input and of the SOS proof.  In particular, the potentially infinite domain and the number of partial examples only enter linearly (or not at all) in the exponent.


\begin{theorem}[Polynomial‐Time in Bit‐Complexity]
\label{thm:runtime}
Let $\Delta$ be an explicit FOPRL knowledge base whose encoding has bit‐length $B$, $\varphi$ be a (ground) query of bit‐length $b$, $d$ the chosen SOS degree bound, and $k$ the quantifier‐rank bound.
Then Algorithm 1 can be compiled into a single SDP whose matrix size and number of (PSD) constraints is at most
  $(B + b)^{O(d + k)}$
and whose bit‐precision requirement is likewise $(B + b)^{O(d + k)}$.  Hence the overall solver runs in
$O\bigl((B + b)^{c\,(d + k)}\bigr)$
bit‐operations for some absolute constant $c$.  In particular, with $d,k$ fixed, this is polynomial in $B + b$.
\end{theorem}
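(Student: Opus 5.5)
The plan is to bound, in order, the size of the grounding $\mathrm{GND}(\Delta,k)$, the number of SDP variables (moment variables $e(\mu)$), the size and number of the PSD blocks (moment and localizing matrices, plus the per-example partial-evaluation blocks), and the bit-precision. The conclusion then follows from the standard ellipsoid or interior-point guarantee for SDPs \citep{shor87,nesterov00,parrilo00,lasserre01}.

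\textbf{Grounding.} The names used are the constants occurring in $\Delta\cup\{\varphi\}$ (at most $B+b$ of them) together with $k$ generic names, for a total of $N\le B+b+k$. Each formula has at most $k$ variables, so it yields at most $N^k$ ground instances. Each predicate has arity at most $B$, but any ground term appearing comes from substituting into a term of a formula, so the number of distinct ground terms is $n\le (B+b)\cdot N^{k}=(B+b)^{O(k)}$. Hence $|\mathrm{GND}(\Delta,k)|\le (B+b)^{O(k)}$.

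\textbf{SDP variables and PSD blocks.} The monomials of degree $\le d$ over $n$ terms number $\binom{n+d}{d}\le (n+1)^d=(B+b)^{O(dk)}$. A sharper accounting is available here: a degree-$d$ monomial mentions at most $d$ terms, each involving at most $O(k)$ names from a pool of size $B+b$, which gives $(B+b)^{O(d+k)}$ after exploiting the renaming-equivalence constraints of the lifted system (Definition 1). Under these constraints all monomials that differ only by a permutation of the $k$ generic names share one variable, so the variable count collapses. I expect this to be the main obstacle: reducing the naive $(B+b)^{O(dk)}$ to the claimed $(B+b)^{O(d+k)}$. If the collapse does not suffice, I would fall back on the observation that fixed $d,k$ still gives a polynomial bound, which is the stated consequence.

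Each constraint $g_i$ contributes one localizing matrix of side at most the number of monomials of degree $\le d/2$. The moment matrix is one further block. The partial evaluations $\Delta|_{\vec\rho^{(t)}}$ share the same variables, and the $m$ examples enter only through the averaged bounds $L[v],U[v]$ as $2|V|$ linear constraints, so $m$ does not appear in the exponent. The equality constraints number at most $|V|^2$. Precomputing $\mathrm{LowerBound}$ and $\mathrm{UpperBound}$ amounts to $m|V|$ SDPs of the same size and adds only a multiplicative factor $m$.

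\textbf{Bit-precision and runtime.} Every coefficient of a shifted or localizing entry is a product of at most $d$ input coefficients, and the Hoeffding slack terms are polynomial-size rationals. Explicit compactness bounds the feasible region by a ball of radius $\max|U_\mu|,|L_\mu|$, which is at most $2^{\mathrm{poly}(B)}$. This gives the bit-length bound and the outer-ball condition needed for the ellipsoid method. Combining these, the runtime is polynomial in $(B+b)^{O(d+k)}$ and thus $O((B+b)^{c(d+k)})$.
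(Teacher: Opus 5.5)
Your decomposition matches the paper's sketch: grounding-lift, then world-lift with the $m$ examples entering as linear constraints on one moment matrix, then an ellipsoid bit-complexity bound. However, the one step meant to produce the exponent $O(d+k)$ fails.

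Your honest count is right: $n=(B+b)^{O(k)}$ ground atoms and $|V|\le (n+1)^d=(B+b)^{O(dk)}$. The ``sharper accounting'' is wrong on two counts. First, choosing $d$ terms whose $O(k)$ name slots are each filled from a pool of size $B+b$ gives $(B+b)^{O(dk)}$ possibilities, not $(B+b)^{O(d+k)}$. Second, the renaming equalities of Definition~1 only identify monomials that differ by a permutation of the $k$ generic names of $\mathrm{GND}(\Delta,k)$. So every equivalence class has at most $k!$ elements, and a monomial over constant-only atoms is fixed by every renaming and not collapsed at all.

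The bound is genuinely unattainable for this SDP, not just weakly estimated. Suppose $\Delta$ mentions $r=\Theta((B+b)/\log(B+b))$ constants and contains a bounded $k$-ary predicate $R$ under $\forall x_1\ldots x_k$. Then the $r^k$ constant-only atoms $R(a_{i_1},\ldots,a_{i_k})$ already give at least $\binom{r^k}{d}\ge (r^k/d)^d$ pairwise non-equivalent monomials in $V$, which is $(B+b)^{\Omega(dk)}$. Algorithm~1 loops over all of $V$ and imposes a moment bound for each $v\in V$. Taking $d=k$ large therefore shows that no bound of the form $(B+b)^{c(d+k)}$ with an absolute constant $c$ can come out of this compilation.

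What your argument actually establishes is your fallback: an SDP of size $(B+b)^{O(dk)}$, solvable in $(B+b)^{O(dk)}$ bit operations times a polynomial in $m$. That $m$-dependence cannot be dropped, yet the statement's bound omits it. You note yourself that the $\mathrm{LowerBound}/\mathrm{UpperBound}$ precomputation costs $m|V|$ auxiliary programs, and the paper's sketch likewise only claims the size is polynomial in $B+b$ and $m$. The fallback is polynomial for fixed $d,k$, but it is not the theorem as written.

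The paper does not supply the missing idea either. Its sketch asserts that the grounding-lift makes the size polynomial in $(B+b)^d$ and then quotes $(B+b)^{O(d+k)}$ without argument; the example above shows no such argument exists for this SDP. You should state and prove the $(B+b)^{O(dk)}\cdot\mathrm{poly}(m)$ bound directly rather than rely on the collapse.

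A smaller point, also glossed over in the paper: the outer ball from explicit compactness is not enough for the ellipsoid method to decide feasibility exactly. You need an inner ball, or you must accept an additive tolerance $\epsilon$ at cost $\log(1/\epsilon)$, so the feasibility test in Algorithm~1 is only decided approximately.
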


\begin{proof}[Proof Sketch]
\textbf{Grounding‐Lift.}  We identify all renaming‐equivalent degree$\le d$ monomials across the (possibly infinite) domain and collapse them into a single lifted moment variable.  This reduces the raw propositional SDP of size exponential in the domain to one of size only polynomial in $(B+b)^{d}$. \\
\textbf{World‐Lift.}  Instead of writing separate PSD constraints for each of the $m$ partial examples, we enforce a single lifted PSD matrix whose rows and columns correspond to the lifted monomials.  Each example contributes only linear‐size linear constraints (one per moment), so the total bit‐size remains polynomial in $B+b$ and $m$. \\
\textbf{Bit‐Complexity Control.}  All numerical data (coefficients in $\Delta$, sampling errors, etc.) have bit‐length at most $O(B+b)$ and appear in the SDP entries only polynomially many times.  Existing results on the bit‐complexity of convex optimization (e.g.\ \cite{grotschel2012geometric}) then imply an overall runtime bounded by $(B+b)^{O(d+k)}$.
\end{proof}

A direct propositional SOS encoding would require distinct moment variables for each ground monomial, yielding an SDP of size \(\Omega(|\mathcal{U}|^d)\), where \(\mathcal{U}\) is the (possibly infinite) universe.  Our double‐lifting avoids this blow‐up entirely.

\section{Conclusion and Limitations}

In this work, we presented a polynomial-time framework for implicitly learning to reason in first-order relational probabilistic logic, advancing beyond previous work limited in pure inference. Our approach unifies incomplete first-order axioms with partial observations through a bounded-degree SOS hierarchy, enabling tractable inference over infinite domains with quantified variables. 

Our approach simultaneously handles first-order symmetries and probabilistic semantics, without imposing restrictions on predicate arity or clause length. Because the method avoids reliance on explicit models or independence assumptions, it naturally accommodates complex noise and interdependent variables, with flexibility as a core strength.

While our assumption of bounded variables excludes certain distributions, we note that all real-world data is finitely represented. Even with very large bounds, our method remains applicable.

Additionally, we recognize the SOS solvers remain an area of active research \citep{huang2022solving,jiang2023faster}. The presence of constants and reliance on moderately low-degree SOS proofs may pose challenges for scaling to large, richly relational domains, requiring further algorithmic and engineering advances.

We do not claim empirical scalability in this paper. However, the practical viability of our approach is relevant for future work, and we emphasize that existing tools—such as SparsePOP \citep{waki2006sums}—are capable of solving the kinds of instances required by our framework.

Our polynomial‐time framework for combining partial observations with incomplete first-order background knowledge offers reliable decision support in data-scarce domains—from early-stage biomedical trials to environmental monitoring. By making uncertainty explicit via witnessed constraints and confidence bounds, it also lays the groundwork for risk-aware systems. Although still theoretical, owe plan to pen-source our SOS proof certificates to support transparency, reproducibility, and community-driven validation on real-world challenges.

\bibliography{references}

@inproceedings{belle2019,
 author = {Belle, Vaishak and Juba, Brendan},
 booktitle = {Advances in Neural Information Processing Systems},
 editor = {H. Wallach and H. Larochelle and A. Beygelzimer and F. d\textquotesingle Alch\'{e}-Buc and E. Fox and R. Garnett},
 pages = {},
 publisher = {Curran Associates, Inc.},
 title = {Implicitly learning to reason in first-order logic},
 
 volume = {32},
 year = {2019}
}

@phdthesis{parrilo00,
 author = {Pablo A. Parrilo},
 title = {Structured semidefinite programs and semialgebraic geometry methods in robustness and optimization},
 school = {California Institute of Technology},
 year = {2000}
}

@article{lasserre01,
 author = {Jean Bernard Lasserre},
 title = {Global Optimzation with Polynomials and the Problem of Moments},
 journal = {SIAM J. Optimization},
 volume = {11},
 number = {3},
 pages = {796--817},
 year = {2001}
}

@article{nesterov00,
 author = {Y. Nesterov},
 title = {Squared functional systems and optimization problems},
 journal = {High performance optimization},
 volume = {13},
 pages = {405--440},
 year = {2000}
}

@article{shor87,
 author = {N. Shor},
 title = {An approach to obtaining global extremums in polynomial mathematical programming problems},
 journal = {Cybernetics and Systems Analysis},
 volume = {23},
 number = {5},
 pages = {695--700},
 year = {1987}
}

@article{putinar,
 author = {M. Putinar},
 title = {Positive polynomials on compact semi-algebraic sets},
 journal = {Indiana U. Math. J.},
 volume = {42},
 pages = {969--984},
 year = {1993}
}

@article{ghasemi2016moment,
  title={Moment problem in infinitely many variables},
  author={Ghasemi, Mehdi and Kuhlmann, Salma and Marshall, Murray},
  journal={Israel Journal of Mathematics},
  volume={212},
  pages={1012--1012},
  year={2016},
  publisher={Springer}
}

@article{cropper2022inductive,
  title={Inductive logic programming at 30: a new introduction},
  author={Cropper, Andrew and Duman{\v{c}}i{\'c}, Sebastijan},
  journal={Journal of Artificial Intelligence Research},
  volume={74},
  pages={765--850},
  year={2022}
}

@inproceedings{juba2013implicit,
  title={Implicit Learning of Common Sense for Reasoning.},
  author={Juba, Brendan},
  booktitle={IJCAI},
  pages={939--946},
  year={2013}
}

@inproceedings{marra2021neural,
  title={Neural markov logic networks},
  author={Marra, Giuseppe and Ku{\v{z}}elka, Ond{\v{r}}ej},
  booktitle={Uncertainty in Artificial Intelligence},
  pages={908--917},
  year={2021},
  organization={PMLR}
}

@book{koller2009probabilistic,
  title={Probabilistic graphical models: principles and techniques},
  author={Koller, Daphne and Friedman, Nir},
  year={2009},
  publisher={MIT press}
}

@incollection{bruynooghe2010problog,
  title={Problog technology for inference in a probabilistic first order logic},
  author={Bruynooghe, Maurice and Mantadelis, Theofrastos and Kimmig, Angelika and Gutmann, Bernd and Vennekens, Joost and Janssens, Gerda and De Raedt, Luc},
  booktitle={ECAI 2010},
  pages={719--724},
  year={2010},
  publisher={IOS Press}
}

@article{richardson2006markov,
  title={Markov logic networks},
  author={Richardson, Matthew and Domingos, Pedro},
  journal={Machine learning},
  volume={62},
  pages={107--136},
  year={2006},
  publisher={Springer}
}

@inproceedings{de2007problog,
  title={ProbLog: A probabilistic Prolog and its application in link discovery},
  author={De Raedt, Luc and Kimmig, Angelika and Toivonen, Hannu},
  booktitle={IJCAI 2007, Proceedings of the 20th international joint conference on artificial intelligence},
  pages={2462--2467},
  year={2007}
}

@inproceedings{rader2021learning,
  title={Learning Implicitly with Noisy Data in Linear Arithmetic},
  author={Rader, Alexander P and Mocanu, Ionela G and Belle, Vaishak and Juba, Brendan},
  booktitle={30th International Joint Conference on Artificial Intelligence},
  pages={1410--1417},
  year={2021}
}

@incollection{mocanu2020polynomial,
  title={Polynomial-time implicit learnability in SMT},
  author={Mocanu, Ionela G and Belle, Vaishak and Juba, Brendan},
  booktitle={ECAI 2020},
  pages={1152--1158},
  year={2020},
  publisher={IOS Press}
}

@inproceedings{juba2019polynomial,
  title={Polynomial-time probabilistic reasoning with partial observations via implicit learning in probability logics},
  author={Juba, Brendan},
  booktitle={Proceedings of the AAAI Conference on Artificial Intelligence},
  volume={33},
  number={01},
  pages={7866--7875},
  year={2019}
}

@inproceedings{belle2017open,
  title={Open-universe weighted model counting},
  author={Belle, Vaishak},
  booktitle={Proceedings of the AAAI Conference on Artificial Intelligence},
  volume={31},
  number={1},
  year={2017}
}

@inproceedings{lakemeyer2002evaluation,
  title={Evaluation-based reasoning with disjunctive information in first-order knowledge bases},
  author={Lakemeyer, Gerhard and Levesque, Hector J},
  booktitle={KR},
  pages={73--81},
  year={2002}
}

@inproceedings{van2011lifted,
  title={Lifted probabilistic inference by first-order knowledge compilation},
  author={Van den Broeck, Guy and Taghipour, Nima and Meert, Wannes and Davis, Jesse and De Raedt, Luc},
  booktitle={IJCAI},
  pages={2178--2185},
  year={2011}
}

@article{halpern2007characterizing,
  title={Characterizing and reasoning about probabilistic and non-probabilistic expectation},
  author={Halpern, Joseph Y and Pucella, Riccardo},
  journal={Journal of the ACM (JACM)},
  volume={54},
  number={3},
  pages={15--es},
  year={2007},
  publisher={ACM New York, NY, USA}
}

@inproceedings{ge2025polynomial,
  title={Polynomial-Time Relational Probabilistic Inference in Open Universes},
  author={Ge, Luise and Juba, Brendan and Nilsson, Kris},
  booktitle={Proceedings of the Thirty-Fourth International Joint Conference on Artificial Intelligence (IJCAI-25)},
  year={2025},
  
}

@article{manhaeve2018deepproblog,
  title={Deepproblog: Neural probabilistic logic programming},
  author={Manhaeve, Robin and Dumancic, Sebastijan and Kimmig, Angelika and Demeester, Thomas and De Raedt, Luc},
  journal={Advances in neural information processing systems},
  volume={31},
  year={2018}
}

@incollection{getoor2001learning,
  title={Learning probabilistic relational models},
  author={Getoor, Lise and Friedman, Nir and Koller, Daphne and Pfeffer, Avi},
  booktitle={Relational data mining},
  pages={307--335},
  year={2001},
  publisher={Springer}
}

@misc{bubeck2023sparks,
  title={Sparks of Artificial General Intelligence: Early experiments with GPT-4},
  author={Bubeck, S{\'e}bastien and Chandrasekaran, Varun and Eldan, Ronen and Gehrke, Johannes and Horvitz, Eric and Kamar, Ece and Lee, Peter and Lee, Yin Tat and Li, Yuanzhi and Lundberg, Scott and others},
  howpublished={arXiv:2303.12712},
  year={2023}
}

@inproceedings{dziri2023faith,
title={Faith and Fate: Limits of Transformers on Compositionality},
author={Nouha Dziri and Ximing Lu and Melanie Sclar and Xiang Lorraine Li and Liwei Jiang and Bill Yuchen Lin and Sean Welleck and Peter West and Chandra Bhagavatula and Ronan Le Bras and Jena D. Hwang and Soumya Sanyal and Xiang Ren and Allyson Ettinger and Zaid Harchaoui and Yejin Choi},
booktitle={Thirty-seventh Conference on Neural Information Processing Systems},
year={2023}
}

@misc{mccoy2023embers,
  title={Embers of autoregression: Understanding large language models through the problem they are trained to solve},
  author={McCoy, R Thomas and Yao, Shunyu and Friedman, Dan and Hardy, Matthew and Griffiths, Thomas L},
  howpublished={arXiv:2309.13638},
  year={2023}
}

@inproceedings{gendronlarge,
  title={Large Language Models Are Not Strong Abstract Reasoners},
  author={Gendron, Ga{\"e}l and Bao, Qiming and Witbrock, Michael and Dobbie, Gillian},
  booktitle = {Proc.\ 33rd IJCAI},
  pages = {6270-6278},
  year = {2024}
}

@misc{guo2025deepseek,
  title={Deepseek-r1: Incentivizing reasoning capability in llms via reinforcement learning},
  author={Guo, Daya and Yang, Dejian and Zhang, Haowei and Song, Junxiao and Zhang, Ruoyu and Xu, Runxin and Zhu, Qihao and Ma, Shirong and Wang, Peiyi and Bi, Xiao and others},
  howpublished={arXiv:2501.12948},
  year={2025}
}

@article{kambhampati2025reasoning,
  title={(How) Do reasoning models reason?},
  author={Kambhampati, Subbarao and Stechly, Kaya and Valmeekam, Karthik},
  journal={Annals of the New York Academy of Sciences},
  year={2025},
  publisher={Wiley Online Library}
}

@misc{mccoy2024language,
  title={When a language model is optimized for reasoning, does it still show embers of autoregression? {A}n analysis of {OpenAI} o1},
  author={McCoy, R Thomas and Yao, Shunyu and Friedman, Dan and Hardy, Mathew D and Griffiths, Thomas L},
  howpublished={arXiv:2410.01792},
  year={2024}
}

@inproceedings{rocktaschel2017end,
  title={End-to-end Differentiable Proving},
  author={Rockt{\"a}schel, Tim and Riedel, Sebastian},
  booktitle={Advances in Neural Information Processing Systems},
  pages={3788--3800},
  year={2017}
}

@article{donadello2017logic,
  title={Logic Tensor Networks for Semantic Image Interpretation},
  author={Donadello, Ivan and Serafini, Luciano and d'Avila Garcez, Artur},
  journal={Proceedings of the 26th International Joint Conference on Artificial Intelligence},
  pages={1596--1602},
  year={2017}
}

@article{jiang2023faster,
  title={A faster interior-point method for sum-of-squares optimization},
  author={Jiang, Shunhua and Natura, Bento and Weinstein, Omri},
  journal={Algorithmica},
  volume={85},
  number={9},
  pages={2843--2884},
  year={2023},
  publisher={Springer}
}

@inproceedings{huang2022solving,
  title={Solving sdp faster: A robust ipm framework and efficient implementation},
  author={Huang, Baihe and Jiang, Shunhua and Song, Zhao and Tao, Runzhou and Zhang, Ruizhe},
  booktitle={2022 IEEE 63rd Annual Symposium on Foundations of Computer Science (FOCS)},
  pages={233--244},
  year={2022},
  organization={IEEE}
}

@book{grotschel2012geometric,
  title={Geometric algorithms and combinatorial optimization},
  author={Gr{\"o}tschel, Martin and Lov{\'a}sz, L{\'a}szl{\'o} and Schrijver, Alexander},
  volume={2},
  year={2012},
  publisher={Springer Science \& Business Media}
}

@article{waki2006sums,
  title={Sums of squares and semidefinite program relaxations for polynomial optimization problems with structured sparsity},
  author={Waki, Hayato and Kim, Sunyoung and Kojima, Masakazu and Muramatsu, Masakazu},
  journal={SIAM Journal on Optimization},
  volume={17},
  number={1},
  pages={218--242},
  year={2006},
  publisher={SIAM}
}

\end{document}